\documentclass[letterpaper]{article} 
\usepackage{aaai24}  
\usepackage{times}  
\usepackage{helvet}  
\usepackage{courier}  
\usepackage[hyphens]{url}  
\usepackage{graphicx} 
\urlstyle{rm} 
\usepackage{natbib}  
\usepackage{caption} 
\frenchspacing  
\setlength{\pdfpagewidth}{8.5in}  
\setlength{\pdfpageheight}{11in}  
%
\usepackage{algorithm}
\usepackage{algorithmic}

%
\usepackage{newfloat}
\usepackage{listings}
\DeclareCaptionStyle{ruled}{labelfont=normalfont,labelsep=colon,strut=off} 
\lstset{%
	basicstyle={\footnotesize\ttfamily},
	numbers=left,numberstyle=\footnotesize,xleftmargin=2em,
	aboveskip=0pt,belowskip=0pt,%
	showstringspaces=false,tabsize=2,breaklines=true}
\floatstyle{ruled}
\newfloat{listing}{tb}{lst}{}
\floatname{listing}{Listing}
%
\pdfinfo{
/TemplateVersion (2024.1)
}

\setcounter{secnumdepth}{0} 

\usepackage{amsmath}
\usepackage{amssymb}
\usepackage{amsthm}
\usepackage{bm}
\usepackage{booktabs}
\usepackage{mathtools}
\usepackage{multirow}
\usepackage{subcaption}
\usepackage{tikz}
\newtheorem{proposition}{Proposition}
\newtheorem*{lemma*}{Lemma}

\DeclarePairedDelimiter{\norm}{\lVert}{\rVert}

%


\title{Efficient Conditional Diffusion Model with Probability Flow Sampling for Image Super-resolution}
\author{
    Yutao Yuan, Chun Yuan
}
\affiliations{
    Tsinghua University\\
    yuanyt21@mails.tsinghua.edu.cn, yuanc@sz.tsinghua.edu.cn
}

\begin{document}

\maketitle

\begin{abstract}
Image super-resolution is a fundamentally ill-posed problem because multiple valid high-resolution images exist for one low-resolution image.
Super-resolution methods based on diffusion probabilistic models can deal with the ill-posed nature by learning the distribution of high-resolution images conditioned on low-resolution images, avoiding the problem of blurry images in PSNR-oriented methods.
However, existing diffusion-based super-resolution methods have high time consumption with the use of iterative sampling, while the quality and consistency of generated images are less than ideal due to problems like color shifting.
In this paper, we propose Efficient Conditional Diffusion Model with Probability Flow Sampling (ECDP) for image super-resolution.
To reduce the time consumption, we design a continuous-time conditional diffusion model for image super-resolution, which enables the use of probability flow sampling for efficient generation.
Additionally, to improve the consistency of generated images, we propose a hybrid parametrization for the denoiser network, which interpolates between the data-predicting parametrization and the noise-predicting parametrization for different noise scales.
Moreover, we design an image quality loss as a complement to the score matching loss of diffusion models, further improving the consistency and quality of super-resolution.
Extensive experiments on DIV2K, ImageNet, and CelebA demonstrate that our method achieves higher super-resolution quality than existing diffusion-based image super-resolution methods while having lower time consumption.
Our code is available at https://github.com/Yuan-Yutao/ECDP.
\end{abstract}

\section{Introduction}

Image super-resolution, the task of recovering high-resolution (HR) images from low-resolution (LR) images, is fundamentally an ill-posed problem. Given an LR image, there are more than one HR images consistent with the input. Existing PSNR-oriented super-resolution methods~\cite{srcnn,edsr,rcan} that learn deterministic mappings from LR images to HR images using pixel losses are effectively predicting the mean of all plausible HR images, and tend to generate blurry HR images with unsatisfactory visual quality. Super-resolution methods based on generative models deal with the ill-posed nature by learning the distribution of HR images conditioned on LR images, allowing for the generation of multiple diverse results from a single input image and avoiding the problem of blurry images.

Recently, the use of diffusion probabilistic models~\cite{ddpm,scoresde}, a trending class of generative models, have grown popular in image super-resolution.
SR3~\cite{sr3} and SRDiff~\cite{srdiff} adapts Diffusion Denoising Probabilistic Models (DDPMs)~\cite{ddpm} for image super-resolution.
They define a Markovian forward process that gradually adds Gaussian noise into image data, and use denoiser neural networks conditioned on LR images to learn its reverse process and generate new images from noise.
They are able to generate diverse and realistic HR images with fine details.

However, there are still challenging aspects that remain to be improved for diffusion-based super-resolution.
Diffusion models typically generate new images iteratively using a Markov chain, which necessitates many neural network evaluations and makes the super-resolution process \textbf{time-consuming}. Additionally, they are prone to problems like color shifting, making the \textbf{quality and consistency} of generated images less than ideal and reducing their performance on super-resolution.

To tackle the challenges, we propose \underline{E}fficient \underline{C}onditional \underline{D}iffusion Model with \underline{P}robability Flow Sampling (ECDP) for image super-resolution.
It gradually corrupts HR images using stochastic differential equations (SDEs), and learns to restore the original images with a denoiser network conditioned on LR images. We generate super-resolution images using probability flow sampling, which can be performed with low time consumption using ordinary differential equation (ODE) solvers.
Additionally, to improve the consistency of generated images with LR input, we use a hybrid parametrization in the denoiser network. It uses the $x_0$-parametrization that predicts the clean data directly in addition to the commonly used $\epsilon$-parametrization, and smoothly interpolates between them for different noise scales.
Moreover, we introduce an image quality loss as a complement to the score matching loss of diffusion models. It measures the feature-space distance of generated HR images and ground-truth images in the dataset, further improving the consistency and quality of super-resolution.
Extensive experiments on multiple datasets encompassing face super-resolution and general super-resolution demonstrate the effectiveness of our approach.

Our main contributions are summarized as follows:
\begin{itemize}
    \item We propose Efficient Conditional Diffusion Model with Probability Flow Sampling (ECDP) for image super-resolution, which generates realistic super-resolution images with low time costs.
    \item With a continuous-time conditional diffusion model based on SDEs designed for super-resolution, we can generate super-resolution images using probability flow sampling, which reduces the time consumption of super-resolution.
    \item We propose score matching with hybrid-parametrization and design an image quality loss for diffusion-based image super-resolution, improving the consistency and quality of generated images.
    \item Extensive experiments on DIV2K, ImageNet, and CelebA demonstrate that our method achieves higher super-resolution quality than existing diffusion-based image super-resolution methods while having lower time consumption.
\end{itemize}

\section{Related Work}

\paragraph{Diffusion probabilistic models}
Diffusion probabilistic models are a family of deep generative models with great success in image generation. DDPM~\cite{ddpm} defines a Markovian diffusion process on image data, gradually adding noise into the image, and learns to reproduce the original image with a sequence of denoisers. It achieves impressive high-quality image generation results. Various improvements to the model have been proposed, including improved architectures~\cite{guided-diffusion} loss reweighting~\cite{improved-ddpm}, and fast sampling~\cite{ddim}. DDPMs are shown to be equivalent to denoising score matching over multiple noise levels~\cite{multidenoise}, which are unified and generalized to the continuous case with SDEs~\cite{scoresde}. We build our super-resolution method on top of the SDE formulation by extending it and conditioning on LR images.

Besides image generation, diffusion models have been applied to a large range of tasks in computer vision. A popular strategy among them is to formulate the task as a conditional generation problem, using diffusion models to predict the distribution of outputs conditioned on the inputs. This strategy has achieved success in text-to-image generation~\cite{imagen}, image super-resolution~\cite{sr3,srdiff}, image inpainting~\cite{palette}, and image colorization~\cite{palette}, among others. A different line of research focuses on using existing diffusion models in a zero-shot manner. By taking an unconditional diffusion model and enforcing consistency with reference images during sampling, it is possible to perform image editing~\cite{sdedit,ilvr} and image inpainting~\cite{repaint} without task-specific training. More recently, this approach has been generalized for a family of linear and non-linear inverse problems~\cite{ddrm,ddnm,dps}.

\paragraph{Image super-resolution}
A lot of super-resolution methods based on deep learning have been proposed in recent years. Most of the early work takes a regression-based approach~\cite{srcnn,edsr,rcan}, learning a deterministic one-to-one mapping from LR images to HR images with L2 or L1 losses. Since the posterior distribution of HR images is highly multi-modal, these methods tend to generate blurry images, effectively predicting the mean of the distribution. To improve the visual quality of generated images, GAN-based approaches~\cite{srgan,esrgan} are proposed for super-resolution. They are able to generate HR images with high quality, but tend to suffer from mode collapse, difficult optimization and low consistency with LR images. Normalizing flows have also been used for image super-resolution~\cite{srflow,hcflow}. They are able to estimate the distribution of HR images conditioned on LR images, allowing for diverse and realistic image super-resolution. However, normalizing flows require invertible architectures, limiting the expressiveness of the models.

Recently, several super-resolution methods using diffusion models have been proposed. SR3~\cite{sr3} and SRDiff~\cite{srdiff} adapts DDPM~\cite{ddpm} for super-resolution, making the model conditional on LR images. They are able to generate realistic, high quality HR images. SR3 concatenates upscaled LR images to noisy HR images as the input to the denoiser network, making the model conditional on LR images. SRDiff uses an LR encoder to extract features from LR input, and further uses residual prediction to improve the convergence speed and performance of the model.
Some methods~\cite{ccdf,dps,gdp,diffpir} use a different zero-shot approach as opposed to aforementioned supervised methods, taking an unconditional diffusion model trained on image generation and modifying its sampling process with guidance. However, their performance is often less than ideal compared to supervised methods due to lack of dedicated training.

\section{Preliminaries}

\begin{figure*}[t]
    \centering
    \includegraphics[width=\linewidth]{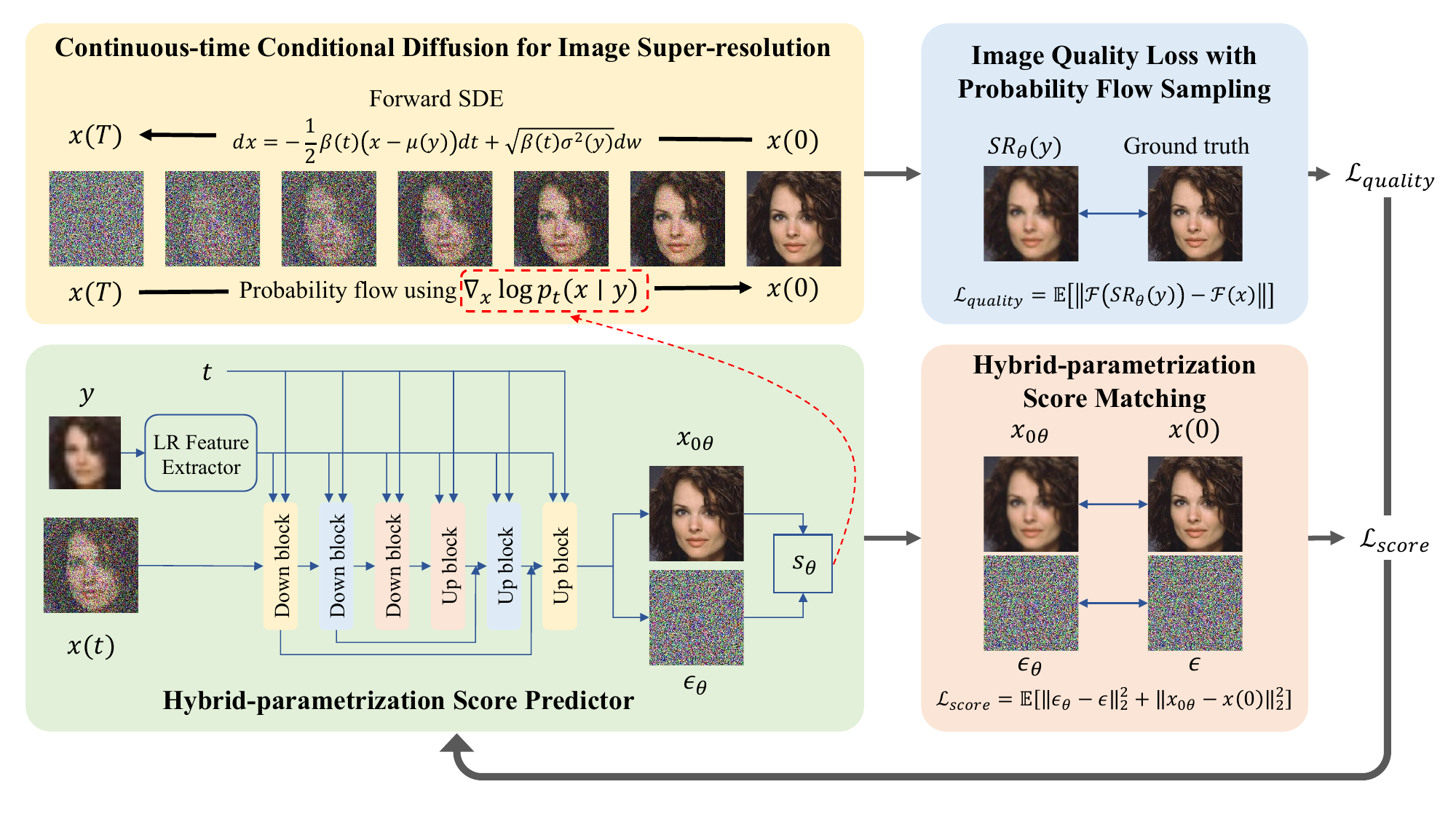}
    \caption{Overview of ECDP. \emph{Top left}: Continuous-time conditional diffusion uses a forward SDE to transform images into noise, and generate new images from noise using probability flow. \emph{Bottom}: The conditional score in the probability flow is approximated with a hybrid-parametrization score predictor $\bm{s}_\theta$, which is trained using score matching. \emph{Top right}: An additional image quality loss that compares the generated HR images with the ground truth is computed using probability flow sampling, improving the quality of super-resolution results.}
    \label{fig:framework}
\end{figure*}

Given a dataset $\mathcal{X} = \{\bm{x}_i\}$ that follows an unknown distribution $p(x)$, continuous-time diffusion probabilistic models define a forward process that gradually injects noise into data $\bm{x}$. This process can be described as the solution to a SDE running from $t = 0$ to $t = T$, starting with i.i.d. samples $\bm{x}(0)$ from the dataset~\cite{scoresde}:
\begin{equation}
    \mathrm{d} \bm{x} = \bm{f}(\bm{x}, t) \mathrm{d} t + g(t) \mathrm{d} \bm{w}
    \label{eq:general-forward}
\end{equation}
where $\bm{f}(\bm{x}, t)$ and $g(t)$ are predefined functions, and $\bm{w}$ is Brownian motion. Denote by $\bm{x}(t)$ the solution at time $t$ and $p_t(\bm{x})$ its probability distribution. The parameters of the forward process are chosen so $p_T$ ends up as a prior distribution with tractable sampling.

It has be shown~\cite{scoresde} that the data distribution $p_0$ can be recovered from $p_T$ using another reverse SDE, running backwards from $t = T$ to $t = 0$:
\begin{equation}
    \mathrm{d} \bm{x} = \left[ \bm{f}(\bm{x}, t) - g(t)^2 \nabla_{\bm{x}} \log p_t(\bm{x}) \right] \mathrm{d} t + g(t) \mathrm{d} \bm{w}
    \label{eq:general-reverse}
\end{equation}

The term $\nabla_{\bm{x}} \log p_t(\bm{x})$ is the score of the distribution $p_t(\bm{x})$, which is intractable because the data distribution is unknown. Diffusion models learn the data distribution by approximating $\nabla_{\bm{x}} \log p_t(\bm{x})$ with a score prediction network $\bm{s}_{\theta}(\bm{x}, t)$. Due to the intractability of the marginal distribution $p_t(\bm{x})$, score matching techniques are deployed in training, giving rise to the score matching loss:
\begin{equation}
    \mathcal{L} = \mathbb{E}_{\bm{x},t} \mathbb{E}_{\bm{x}(t)} \left[ \norm*{\bm{s}_{\theta}(\bm{x}(t), t) - \nabla_{\bm{x}(t)} \log p_{0t}(\bm{x}(t) \mid \bm{x})}_2^2 \right]
    \label{eq:general-scoreloss}
\end{equation}
where $p_{0t}$ is the transition probability of the forward process and can often be computed analytically. In practice the loss is often reweighted, where terms associated with different $t$ are assigned different weights, to ensure better convergence in training.

Besides the reverse SDE, it is also possible to sample from the learned distribution using probability flow, which takes the form of an ODE:
\begin{equation}
    \mathrm{d} \bm{x} = \left[ \bm{f}(\bm{x}, t) - \frac{1}{2} g(t)^2 \nabla_{\bm{x}} \log p_t(\bm{x}) \right] \mathrm{d} t
    \label{eq:general-ode}
\end{equation}
It can be solved using off-the-shelf ODE solvers, allowing efficient sampling using much less time than the reverse SDE.

\section{Proposed Method}

In this section, we present Efficient Conditional Diffusion Model with Probability Flow Sampling (ECDP) for image super-resolution.
From a dataset of paired HR and LR images $\mathcal{D} = \{(\bm{x}_i, \bm{y}_i)\}$, our model learns the conditional distribution $p(\bm{x} \mid \bm{y})$. Given LR images, super-resolution images are generated by sampling from this conditional distribution.
Our method is illustrated in Figure \ref{fig:framework}.

\subsection{Continuous-Time Conditional Diffusion for Image Super-Resolution}
In our method, we design a conditional diffusion model for image super-resolution with a forward SDE that transforms HR images $\bm{x}$ into noise while conditioned on LR images $\bm{y}$:
\begin{equation}
    \mathrm{d} \bm{x} = -\frac{1}{2} \beta(t) (\bm{x} - \bm{\mu}(\bm{y})) \mathrm{d} t + \sqrt{\beta(t) \sigma^2(\bm{y})} \mathrm{d} \bm{w}
    \label{eq:ours-forward}
\end{equation}
where $\bm{\mu}(\bm{y})$ and $\sigma^2(\bm{y})$ are the per-pixel mean and variance of $p(\bm{x} \mid \bm{y})$ respectively, and $\beta(t)$ is a hyperparameter that controls how fast noise is injected into data.

Since it is impossible to compute $\bm{\mu}(\bm{y})$ and $\sigma^2(\bm{y})$ without direct access to the true data distribution, we approximate $\bm{\mu}(\bm{y})$ by upscaling $\bm{y}$ using bicubic interpolation, and set $\sigma^2(\bm{y})$ to a predefined constant determined using the empirical variance of $\bm{x} - \bm{\mu}(\bm{y})$ over the training dataset. It is worth noting that the use of $\bm{\mu}(\bm{y})$ in the forward process is similar to residual prediction~\cite{srdiff}, which subtracts upscaled LR images from HR images before diffusion. We differ from residual prediction in that we integrate the LR images to the forward process of diffusion directly, and we additionally considers the variance of HR images.

With the use of $\bm{\mu}(\bm{y})$ and $\sigma^2(\bm{y})$, the forward process has the mean-preserving and variance-preserving properties, as formalized below:
\begin{proposition}
    \label{thm:ours-diffusion}
    The forward process given by \eqref{eq:ours-forward} keeps the mean and variance of $\bm{x}(t)$ conditioned on $\bm{y}$ unchanged during the transform from $t = 0$ to $t = T$. More specifically: 
    \begin{align}
        \mathbb{E}[\bm{x}(t) \mid \bm{y}] &= \bm{\mu}(\bm{y}) \\
        \text{Var}[\bm{x}(t) \mid \bm{y}] &= \sigma^2(\bm{y})
    \end{align}
\end{proposition}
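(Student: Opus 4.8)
The plan is to work conditionally on a fixed low-resolution image $\bm{y}$ and exploit that \eqref{eq:ours-forward} is a linear (Ornstein--Uhlenbeck-type) SDE, for which the first and second conditional moments satisfy closed ordinary differential equations. First I would substitute $\bm{r}(t) := \bm{x}(t) - \bm{\mu}(\bm{y})$, which turns the forward process into the homogeneous linear SDE $\mathrm{d}\bm{r} = -\tfrac12\beta(t)\bm{r}\,\mathrm{d}t + \sqrt{\beta(t)\sigma^2(\bm{y})}\,\mathrm{d}\bm{w}$, and record the initial conditions $\mathbb{E}[\bm{r}(0) \mid \bm{y}] = \bm{0}$ and $\text{Var}[\bm{r}(0) \mid \bm{y}] = \sigma^2(\bm{y})$, which hold by the very definition of $\bm{\mu}(\bm{y})$ and $\sigma^2(\bm{y})$ as the per-pixel mean and variance of $p(\bm{x} \mid \bm{y})$.

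For the mean, I would take conditional expectations on both sides of the SDE for $\bm{r}$; since $\beta$ is bounded on $[0,T]$, the It\^o integral is a martingale with zero expectation, so $\bm{m}(t) := \mathbb{E}[\bm{r}(t) \mid \bm{y}]$ solves the linear ODE $\bm{m}'(t) = -\tfrac12\beta(t)\bm{m}(t)$ with $\bm{m}(0) = \bm{0}$. By uniqueness of solutions, $\bm{m}(t) \equiv \bm{0}$, i.e. $\mathbb{E}[\bm{x}(t) \mid \bm{y}] = \bm{\mu}(\bm{y})$ for all $t \in [0,T]$.

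For the variance, I would apply It\^o's formula to the per-pixel square $r_i(t)^2$: the drift term contributes $-\beta(t)\,r_i^2$, the quadratic-variation term contributes $\beta(t)\sigma^2(\bm{y})$ (the $i$-th diagonal entry of the diffusion coefficient), and the martingale part again vanishes in expectation. Writing $v(t) := \mathbb{E}[r_i(t)^2 \mid \bm{y}] = \text{Var}[x_i(t) \mid \bm{y}]$, where the two are equal because $\mathbb{E}[r_i(t) \mid \bm{y}] = 0$ from the previous step, this yields the linear ODE $v'(t) = -\beta(t)\bigl(v(t) - \sigma^2(\bm{y})\bigr)$ with $v(0) = \sigma^2(\bm{y})$, whose unique solution is the constant equilibrium $v(t) \equiv \sigma^2(\bm{y})$. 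Ranging over all pixel indices $i$ gives $\text{Var}[\bm{x}(t) \mid \bm{y}] = \sigma^2(\bm{y})$.

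The argument has no real obstacle; the only points requiring a word of care are the routine integrability/regularity check that lets us drop the expectation of the It\^o integral (immediate from boundedness of $\beta$ on the finite horizon $[0,T]$ together with the standard a priori $L^2$ bound for linear SDEs), and making explicit that the base case at $t = 0$ is precisely the definitional choice of $\bm{\mu}(\bm{y})$ and $\sigma^2(\bm{y})$. An alternative route would avoid It\^o's formula entirely by invoking the known Gaussian transition kernel $p_{0t}(\cdot \mid \bm{x}(0))$ of this linear SDE and integrating over $\bm{x}(0) \sim p(\bm{x} \mid \bm{y})$, but the moment-ODE argument above is the shortest.
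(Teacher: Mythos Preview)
Your argument is correct. You work directly with the moment ODEs that any linear (Ornstein--Uhlenbeck) SDE satisfies: after the shift $\bm{r}(t)=\bm{x}(t)-\bm{\mu}(\bm{y})$ you read off $\bm{m}'=-\tfrac12\beta\bm{m}$ and, via It\^o on $r_i^2$, $v'=-\beta(v-\sigma^2(\bm{y}))$, and both equations are at their equilibrium at $t=0$. The paper instead performs the additional rescaling $\bm{z}=\bm{r}/\sqrt{\sigma^2(\bm{y})}$ to land exactly on the standard VP-SDE, then cites the known Gaussian transition kernel $p_{0t}(\bm{z}(t)\mid\bm{z}(0))=\mathcal{N}(\sqrt{\alpha(t)}\bm{z}(0),(1-\alpha(t))\bm{I})$ and computes the marginal moments from there---precisely the alternative route you mention at the end. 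Your approach is more self-contained (it does not import the VP-SDE transition formula as a black box) and makes transparent why the equilibrium is preserved; the paper's approach is shorter once one is willing to quote the VP-SDE kernel, and it has the side benefit of yielding the explicit transition density $p_{0t}(\bm{x}(t)\mid\bm{x}(0),\bm{y})$ that is reused later in deriving the score-matching loss.
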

By maintaining the mean and variance of data during the forward process, the amount of change in the data distribution is minimized, making model training easier and image generation faster.

To enable conditional generation of HR images, we learn to approximate the conditional score $\nabla_{\bm{x}} \log p_t(\bm{x} \mid \bm{y})$ with a conditional score prediction network $\bm{s}_{\theta}(\bm{x}, \bm{y}, t)$. The score prediction network is trained with denoising score matching~\cite{denoise-sm} using the following loss:
\begin{multline}
    \mathcal{L}_{\text{score}} = \mathbb{E}_{(\bm{x}, \bm{y}) \sim \mathcal{D}} \mathbb{E}_{t} \mathbb{E}_{\bm{\epsilon} \sim \mathcal{N}(\bm{0}, \bm{I})} \\
    \left[ \norm*{\bm{s}_{\theta}\left(\hat{\bm{\mu}}(\bm{x}, \bm{y}, t) + \sqrt{\hat{\sigma}^2(\bm{y}, t)} \bm{\epsilon}, \bm{y}, t \right) + \frac{\bm{\epsilon}}{\sqrt{\hat{\sigma}^2(\bm{y}, t)}}}_2^2 \right]
    \label{eq:scoreloss}
\end{multline}
where
\begin{align}
    \hat{\bm{\mu}}(\bm{x}, \bm{y}, t) &= \sqrt{\alpha(t)} (\bm{x} - \bm{\mu}(\bm{y})) + \bm{\mu}(\bm{y}) \\
    \hat{\sigma}^2(\bm{y}, t) &= (1 - \alpha(t)) \sigma^2(\bm{y}) \\
    \alpha(t) &= \exp\left( -\int_{0}^{t} \beta(s) \mathrm{d} s \right)
\end{align}

For efficient generation of super-resolution images, we sample HR images with probability flow, which can be much faster than SDE-based sampling due to its deterministic nature.
Given an LR image $\bm{y}$, new HR images can be sampled from the learned conditional distribution by solving the following ODE from $t = T$ to $t = 0$, starting with $\bm{x}(T)$ sampled from the limit distribution of the forward SDE:
\begin{gather}
    \bm{x}(T) \sim \mathcal{N}(\bm{\mu}(\bm{y}), \sigma^2(\bm{y}) I) \\
    \mathrm{d} \bm{x} = \left[ -\frac{1}{2} \beta(t) (\bm{x} - \bm{\mu}(\bm{y})) - \frac{1}{2} \beta(t) \sigma^2(\bm{y}) \bm{s}_{\theta}(\bm{x}, \bm{y}, t) \right] \mathrm{d} t
    \label{eq:ours-ode}
\end{gather}

\subsection{Hybrid-Parametrization Score Matching}

\begin{figure}
    \centering
    \begin{subcaptionblock}{0.3\linewidth}
        \includegraphics[width=\linewidth]{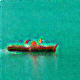}
        \caption{$\epsilon$-param}
        \label{fig:param-error-eps}
    \end{subcaptionblock}
    \hspace{0.02\linewidth}
    \begin{subcaptionblock}{0.3\linewidth}
        \includegraphics[width=\linewidth]{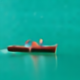}
        \caption{$x_0$-param}
        \label{fig:param-error-x0}
    \end{subcaptionblock}
    \hspace{0.02\linewidth}
    \begin{subcaptionblock}{0.3\linewidth}
        \includegraphics[width=\linewidth]{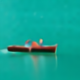}
        \caption{hybrid-param}
    \end{subcaptionblock}
    \caption{The denoised images produced by the $\epsilon$-parametrization, the $x_0$-parametrization and the hybrid-parametrization.}
    \label{fig:param-error}
\end{figure}

Existing diffusion probabilistic models~\cite{ddpm,scoresde} typically parametrize the denoiser network in such a way that its output is matched against the normalized noise component $\bm{\epsilon}$ in noisy data $\bm{x}(t)$ during training, so the network output has identical variance across different $t$. We denote it the $\epsilon$-parametrization. $\epsilon$-parametrization has been proved effective in image generation~\cite{ddpm}, improving the quality of denoising predictions when the amount of noise is low. However, $\epsilon$-parametrization alone does not work well for image super-resolution. In image super-resolution, the space of plausible HR images $\bm{x}$ is strongly constrained by the consistency with paired LR images $\bm{y}$ and concentrated around a single point. $\epsilon$-parametrization has difficulty learning this consistency constraint, because it can only recover the clean HR images indirectly by subtracting the predicted noise from noisy images, requiring wildly varying prediction values for different noisy data. It tends to produce very inconsistent denoising results when the amount of noise is large. This effect can be seen in Figure \ref{fig:param-error-eps}, where $\epsilon$-parametrization does not produce satisfactory denoising predictions, leaving artifacts in the denoised image. As a result, super-resolution methods using the $\epsilon$-parametrization tends to produce HR images inconsistent with LR input.

A natural alternative to $\epsilon$-parametrization is to predict the clean data component in noisy data as opposed to the noise component. We denote it the $x_0$-parametrization. $x_0$-parametrization has been investigated in the context of unconditional image generation~\cite{dualoutput,diff-distill}, where it was found to have better performance than $\epsilon$-parametrization in certain cases. In image super-resolution, $x_0$-parametrization produces clean images with no artifacts even when the amount of noise is large, because the denoiser network can use the LR inputs to recover a good estimate of clean HR images directly.

For the conditional image diffusion in our method, $\epsilon$-parametrization and $x_0$-parametrization are defined by expressing the score prediction values in terms of neural networks $\bm{\epsilon}_{\theta}$ and $\bm{x}_{0\theta}$ respectively:
\begin{align}
    \bm{s}_{\theta,\epsilon}(\bm{x}, \bm{y}, t) &= -\frac{1}{\sqrt{\hat{\sigma}^2(\bm{y}, t)}} \bm{\epsilon}_{\theta}(\bm{x}, \bm{y}, t) \\
    \bm{s}_{\theta,x_0}(\bm{x}, \bm{y}, t) &= -\frac{1}{\hat{\sigma}^2(\bm{y})} \left( \bm{x} - \hat{\bm{\mu}}(\bm{x}_{0\theta}(\bm{x}, \bm{y}, t), \bm{y}, t) \right)
\end{align}
where $\bm{\epsilon}_{\theta}$ and $\bm{x}_{0\theta}$ are trained using the following reweighted version of the score matching loss \eqref{eq:scoreloss}:
\begin{gather}
\begin{split}
    \mathcal{L}_{\text{score}} &= \mathbb{E}_{(\bm{x}, \bm{y}) \sim \mathcal{D}} \mathbb{E}_{t} \mathbb{E}_{\bm{\epsilon} \sim \mathcal{N}(\bm{0}, \bm{I})} \\
    &\mathrel{\phantom{=}} \quad \left[ \norm*{\bm{\epsilon}_{\theta}(\bm{x}_t, \bm{y}, t) - \bm{\epsilon}}_2^2 + \norm*{\bm{x}_{0\theta}(\bm{x}_t, \bm{y}, t) - \bm{x}}_2^2 \right]
\end{split}
    \shortintertext{where}
    \bm{x}_t = \hat{\bm{\mu}}(\bm{x}, \bm{y}, t) + \sqrt{\hat{\sigma}^2(\bm{y}, t)} \bm{\epsilon}
\end{gather}

As the $\epsilon$-parametrization has low estimation errors in the low-noise region, while $x_0$-parametrization has low estimation errors in the high-noise region, to take advantage of both parametrizations, we use a hybrid approach to smoothly interpolate between these two parametrizations:
\begin{equation}
    \bm{s}_{\theta}(\bm{x}, \bm{y}, t) = \lambda(t) \bm{s}_{\theta,\epsilon}(\bm{x}, \bm{y}, t) + (1 - \lambda(t)) \bm{s}_{\theta,x_0}(\bm{x}, \bm{y}, t)
    \label{eq:hybrid-param}
\end{equation}
where $\lambda(t)$ is the interpolation coefficient. In our method, we choose $\lambda(t) = \alpha(t)^c$, where $c$ is a constant in the range $[0.5, 1.5]$, selected for each dataset individually to minimize the prediction error for the hybrid parametrization. The hybrid-parametrization is able to achieve low estimation errors in all noise scales.

\subsection{Image Quality Loss with Probability Flow Sampling}
Diffusion probabilistic models do not directly optimize for the quality of generated images during training. Instead, the quality is only optimized indirectly in the learning of data distribution with score matching loss.
In image super-resolution, each LR image has only one paired HR image in the dataset, and it is difficult for diffusion models to estimate the conditional distribution of HR images accurately from this single data point.
It is therefore important for diffusion-based super-resolution methods to add additional image quality losses into training.

In CNN-based super-resolution methods, losses like the pixel loss and the perceptual loss directly measure the distance between super-resolution images and the ground truth. However, these losses have not received usage in diffusion-based super-resolution methods, as they require generating new HR images during training, which is computationally expensive for diffusion models using stochastic sampling.

With the use of probability flow sampling in our method for efficient super-resolution, it becomes possible to introduce such losses into the training of the diffusion model.
Therefore, we propose an image quality loss for diffusion-based image super-resolution, defined as the feature-space distance between generated HR images and the ground truth:
\begin{equation}
    \mathcal{L}_{\text{quality}} = \mathbb{E}_{(\bm{x}, \bm{y}) \sim \mathcal{D}} \left[ \norm*{\mathcal{F}(\text{SR}_{\theta}(\bm{y})) - \mathcal{F}(\bm{x})} \right]
\end{equation}
where $\text{SR}_{\theta}(y)$ is an HR image sample generated using the probability flow.
$\mathcal{F}$ is chosen as the feature maps of a VGG network pretrained on image classification, making $\mathcal{L}_{\text{quality}}$ equivalent to the perceptual loss in CNN-based methods, but in principle it can be any function that converts images to feature vectors.

To compute gradients of the image quality loss with regards to the network parameters, it is necessary to backpropagate through $\text{SR}_{\theta}(y)$, the solution of the probability flow ODE.
This can be achieved efficiently using the adjoint method~\cite{neuralode}, which expresses the gradient of ODE solutions with regards to model parameters in terms of another augmented ODE, making it possible to compute gradients without depending on the intermediate values of the original ODE. Compared with direct backpropagation through the ODE solver, the memory consumption of computing the image quality loss is reduced from $O(s)$ to $O(1)$, where $s$ is the number of sampling steps.

\begin{figure}
    \centering
    \subcaptionbox{no $\mathcal{L}_{\text{quality}}$}
    {\begin{tikzpicture}[every node/.style={inner sep=0,outer sep=0}]
        \node {\includegraphics[width=0.45\linewidth]{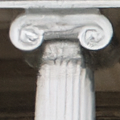}};
        \draw[draw=red] (0.07\linewidth, 0.04\linewidth) rectangle ++(0.11\linewidth, 0.11\linewidth);
        \draw[draw=red] (-0.08\linewidth, -0.05\linewidth) rectangle ++(0.14\linewidth, 0.10\linewidth);
    \end{tikzpicture}}
    \hspace{0.03\linewidth}
    \subcaptionbox{with $\mathcal{L}_{\text{quality}}$}
    {\begin{tikzpicture}[every node/.style={inner sep=0,outer sep=0}]
        \node {\includegraphics[width=0.45\linewidth]{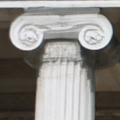}};
        \draw[draw=red] (0.07\linewidth, 0.04\linewidth) rectangle ++(0.11\linewidth, 0.11\linewidth);
        \draw[draw=red] (-0.08\linewidth, -0.05\linewidth) rectangle ++(0.14\linewidth, 0.10\linewidth);
    \end{tikzpicture}}
    \caption{Visualization of images generated by our method trained without and with $\mathcal{L}_{\text{quality}}$. The image generated by the model with $\mathcal{L}_{\text{quality}}$ has more visible structure (the lines on the pillar) and less background noise.}
    \label{fig:image-quality-loss-effect}
\end{figure}

As seen in Figure \ref{fig:image-quality-loss-effect}, the image quality loss can significantly improve the quality of super-resolution images for diffusion-based methods. The model trained with $\mathcal{L}_{\text{quality}}$ is able to generate cleaner images with more visible structure and less background noise compared to the model trained without $\mathcal{L}_{\text{quality}}$.

\section{Experiments}

In this section, we conduct experiments on multiple datasets encompassing both face images and general images to demonstrate the performance of our method.

\subsection{Experimental Setup}

\paragraph{Model architecture and hyperparameters}
In our experiments, we set $\beta(t)$ in our forward process \eqref{eq:ours-forward} to a linear function increasing from $\beta(0) = 0.1$ and $\beta(T) = 20$, matching the settings in VP-SDE~\cite{scoresde}. Score prediction from $\epsilon$-parametrization and $x_0$-parametrization are produced using a single network with two output heads, using an U-Net architecture with BigGAN~\cite{biggan} residual blocks. To make the model conditional on LR images, we use an LR feature extractor with the RRDB~\cite{esrgan} architecture and add the features to each layer of the score prediction network.
To improve the training efficiency, we use only the standard score matching loss in early stages of training, and then add our image quality loss later on. To generate HR images, we perform probability flow sampling using a standard Runge-Kutta ODE solver with absolute and relative error tolerance of $10^{-4}$.

\paragraph{Datasets}
For general image super-resolution ($4 \times$), we evaluate the performance of various methods on two datasets, DIV2K~\cite{div2k} and ImageNet~\cite{imagenet}. For DIV2K, models are trained using random HR image crops of size $160 \times 160$ and evaluated with full-size images. The training dataset is augmented with images from Flickr2K following the practice in earlier methods~\cite{srflow,srdiff}. For ImageNet, images are center cropped and resized to $256 \times 256$ for HR following SR3~\cite{sr3}, and further downsampled to $64 \times 64$ with bicubic interpolation for LR.

For face image super-resolution ($8 \times$), we train and evaluate on CelebA. Images are cropped and resized to $160 \times 160$ following the procedures in ~\cite{srflow}, and then downsampled as LR using bicubic interpolation.

\paragraph{Baselines}
We compare our image super-resolution method with the following baselines: PSNR-oriented method RRDB~\cite{esrgan}, GAN-based method ESRGAN~\cite{esrgan}, normalizing flow-based methods SRFlow~\cite{srflow} and HCFlow~\cite{hcflow}, as well as several diffusion-based methods, SR3~\cite{sr3}, SRDiff~\cite{srdiff}, IR-SDE~\cite{irsde}, GDP~\cite{gdp}, and DiffPIR~\cite{diffpir}. Among diffusion-based methods, GDP and DiffPIR uses pretrained unconditional diffusion models, while other methods including ours train conditional models for super-resolution. We use official results for baselines where available, and train the models from scratch otherwise.

\paragraph{Evaluation metrics}
In addition to PSNR and SSIM, two standard metrics for image super-resolution, we use \textbf{LPIPS}~\cite{lpips} to measure the visual quality of super-resolution results. It is a perceptual metric that is known to correlate better with human perception than traditional metrics like PSNR and SSIM, and is therefore considered the main metric in our experiments.

\subsection{Results of Image Super-Resolution}

\paragraph{General image super-resolution}
The quantitative results on DIV2K and ImageNet are shown in Table \ref{tab:results-df2k} and \ref{tab:results-imagenet} respectively, and the visual results are shown in Figure \ref{fig:results-general}. On both datasets, our method outperforms all baseline methods in LPIPS and achieves the best overall super-resolution quality.
RRDB generates blurry images and has high LPIPS distance despite achieving high PSNR and SSIM scores, further confirming that PSNR and SSIM does not correlate well with perceptual quality. SR3 has good PSNR and SSIM metrics on ImageNet, but its performance is poor in the main metric LPIPS. GDP and DiffPIR, the two diffusion-based methods that use pretrained unconditional models as opposed to training dedicated conditional models, have inferior performance in the three metrics compared to other diffusion-based methods, demonstrating the importance of designing diffusion models specifically for image super-resolution.

\begin{table}
    \centering
    \begin{tabular}{ccccc}
        \toprule
        Method & LPIPS\textdownarrow & PSNR\textuparrow & SSIM\textuparrow & \#Params \\
        \midrule
        RRDB & 0.253 & 29.44 & 0.84 & 16.7M \\
        ESRGAN & 0.124 & 26.22 & 0.75 & 16.7M \\
        SRFlow & 0.120 & 27.09 & 0.76 & 39.5M \\
        HCFlow & 0.110 & 26.61 & 0.74 & 23.2M \\
        \midrule
        SR3 & 0.175 & 25.90 & 0.75 & 97.8M \\
        SRDiff & 0.136 & 27.41 & \textbf{0.79} & 37.6M \\
        IR-SDE & 0.231 & 25.90 & 0.66 & 137.1M \\
        Ours & \textbf{0.108} & \textbf{28.03} & \textbf{0.79} & 42.6M \\
        \bottomrule
    \end{tabular}
    \caption{Results of $4\times$ image super-resolution on DIV2K. Diffusion-based methods and non-diffusion-based methods are grouped together respectively. Best results among diffusion-based methods are highlighted in bold.}
    \label{tab:results-df2k}
\end{table}

\begin{table}
    \centering
    \begin{tabular}{cccccc}
        \toprule
        Method & LPIPS\textdownarrow & PSNR\textuparrow & SSIM\textuparrow & \#Params \\
        \midrule
        RRDB & 0.245 & 27.23 & 0.78 & 16.7M \\
        ESRGAN & 0.123 & 24.18 & 0.67 & 16.7M \\
        SRFlow & 0.142 & 24.09 & 0.67 & 39.5M \\
        HCFlow & 0.129 & 25.07 & 0.70 & 23.2M \\
        \midrule
        SR3 & 0.191 & \textbf{26.40} & \textbf{0.76} & 625M \\
        SRDiff & 0.154 & 24.04 & 0.59 & 37.6M \\
        GDP & 0.240 & 24.42 & 0.68 & --- \\
        DiffPIR & 0.219 & 25.19 & 0.70 & --- \\
        Ours & \textbf{0.110} & 25.81 & 0.74 & 37.6M \\
        \bottomrule
    \end{tabular}
    \caption{Results of $64 \times 64 \rightarrow 256 \times 256$ image super-resolution on ImageNet. Best results among diffusion-based methods are highlighted in bold. \#Params for GDP and DiffPIR are omitted because they use pretrained unconditional diffusion models and do not train new models on their own.}
    \label{tab:results-imagenet}
\end{table}

\begin{table}
    \centering
    \begin{tabular}{c|ccc}
        \toprule
        Method & SR3 & SRDiff & IR-SDE \\
        Inference Time & 83.1s & 2.4s & 6.2s \\
        \midrule
        Method & GDP & DiffPIR & Ours \\
        Inference Time & 94.6s & 13.0s & \textbf{2.0s} \\
        \bottomrule
    \end{tabular}
    \caption{Time required to generate a single $256 \times 256$ HR image for diffusion-based methods.}
    \label{tab:infer-time}
\end{table}

\begin{table}
    \centering
    \begin{tabular}{ccccc}
        \toprule
        Method & LPIPS\textdownarrow & PSNR\textuparrow & SSIM\textuparrow & \#Params \\
        \midrule
        RRDB & 0.230 & 26.59 & 0.77 & 16.7M \\
        ESRGAN & 0.120 & 22.88 & 0.63 & 16.7M \\
        SRFlow & 0.110 & 25.24 & 0.71 & 40.0M \\
        HCFlow & 0.090 & 24.83 & 0.69 & 27.0M \\
        \midrule
        SR3 & 0.109 & 24.26 & 0.68 & 97.8M \\ 
        SRDiff & 0.106 & 25.38 & \textbf{0.74} & 12.0M \\
        Ours & \textbf{0.097} & \textbf{25.48} & 0.73 & 40.0M \\
        \bottomrule
    \end{tabular}
    \caption{Results of face image super-resolution on CelebA. Best results among diffusion-based methods are highlighted in bold.}
    \label{tab:results-face}
\end{table}

\begin{figure*}[t]
    \centering
    \begin{tabular}{@{\hspace{0pt}}c@{\hspace{12pt}}c@{\hspace{12pt}}c@{\hspace{12pt}}c@{\hspace{12pt}}c@{\hspace{12pt}}c@{\hspace{0pt}}}
        \multirow{4}{*}[0.106\linewidth]{\includegraphics[width=0.272\linewidth]{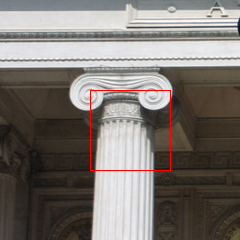}}
        & \includegraphics[width=0.122\linewidth]{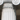}
        & \includegraphics[width=0.122\linewidth]{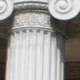}
        & \includegraphics[width=0.122\linewidth]{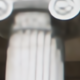}
        & \includegraphics[width=0.122\linewidth]{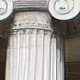}
        & \includegraphics[width=0.122\linewidth]{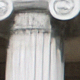} \\
        & LR & Ground Truth & RRDB & ESRGAN & SRFlow \\
        & \includegraphics[width=0.122\linewidth]{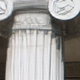}
        & \includegraphics[width=0.122\linewidth]{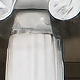}
        & \includegraphics[width=0.122\linewidth]{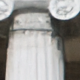}
        & \includegraphics[width=0.122\linewidth]{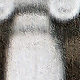}
        & \includegraphics[width=0.122\linewidth]{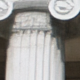} \\
        & HCFlow & SR3 & SRDiff & IR-SDE & Ours \\
        \multirow{4}{*}[0.106\linewidth]{\includegraphics[width=0.272\linewidth]{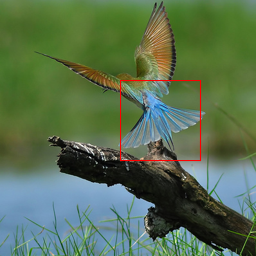}}
        & \includegraphics[width=0.122\linewidth]{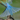}
        & \includegraphics[width=0.122\linewidth]{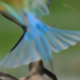}
        & \includegraphics[width=0.122\linewidth]{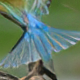}
        & \includegraphics[width=0.122\linewidth]{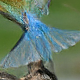}
        & \includegraphics[width=0.122\linewidth]{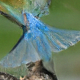} \\
        & LR & RRDB & ESRGAN & SRFlow & HCFlow \\
        & \includegraphics[width=0.122\linewidth]{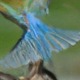}
        & \includegraphics[width=0.122\linewidth]{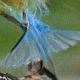}
        & \includegraphics[width=0.122\linewidth]{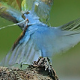}
        & \includegraphics[width=0.122\linewidth]{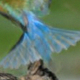}
        & \includegraphics[width=0.122\linewidth]{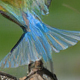} \\
        & SR3 & SRDiff & GDP & DiffPIR & Ours
    \end{tabular}
    \caption{Visual results of general image super-resolution. The first two rows are results on DIV2K validation set. The last two rows are results on ImageNet dev set.}
    \label{fig:results-general}
\end{figure*}

In addition, the sampling speed of diffusion-based super-resolution methods in our experiments is listed in Table \ref{tab:infer-time}. Our method is able to perform super-resolution with the least amount of time among all diffusion-based methods, confirming the efficiency of probability flow sampling.

\paragraph{Face image super-resolution}
The results on CelebA are shown in Table \ref{tab:results-face}. Similar to the general image super-resolution case, our method reaches state-of-the-art performance and has the best LPIPS score among diffusion-based methods.
Our method is able to produce faces with realistic details without generating unnecessary noise and distorting the images.

\subsection{Ablation Study}

\begin{table}
    \centering
    \begin{tabular}{c@{\hspace{8pt}}c@{\hspace{8pt}}c@{\hspace{9pt}}c@{\hspace{9pt}}c}
        \toprule
        Parametrization & $\mathcal{L}_{\text{quality}}$ & LPIPS\textdownarrow & PSNR\textuparrow & SSIM\textuparrow \\
        \midrule
        hybrid, $c = 1.0$ & \checkmark & 0.108 & 28.03 & 0.79 \\
        \midrule
        hybrid, $c = 0.5$ & \checkmark & 0.109 & 28.07 & 0.79 \\
        hybrid, $c = 1.5$ & \checkmark & 0.107 & 28.01 & 0.79 \\
        $\epsilon$-parametrization & \checkmark & 0.128 & 28.02 & 0.79 \\
        $x_0$-parametrization & \checkmark & 0.112 & 27.72 & 0.77 \\
        hybrid, $c = 1.0$ & \texttimes & 0.120 & 27.26 & 0.76 \\
        \bottomrule
    \end{tabular}
    \caption{Results of the ablation study. All results are measured on the DIV2K $4 \times$ task.}
    \label{tab:ablation}
\end{table}

To study the influence of difference choices of parametrizations in the denoiser network as well as the use of image quality loss, we conduct ablation studies as illustrated in Table \ref{tab:ablation}.
Our hybrid-parametrization achieves the best super-resolution results in all metrics among three choices of parametrizations since it has the advantages of both $\epsilon$ and $x_0$-parametrization. $\epsilon$-parametrization has the lowest LR-PSNR score among all, confirming our analysis that it generates images with low consistency. The performance of hybrid-parametrization is mostly insensitive to the choice of $c$.
For the image quality loss, it can be seen that the model trained with $\mathcal{L}_{\text{quality}}$ achieves significantly better metrics than the model without $\mathcal{L}_{\text{quality}}$, confirming its importance for diffusion-based super-resolution.

\section{Conclusion}
In this paper, we proposed ECDP, an image super-resolution framework with a continuous-time conditional diffusion model. It deploys a hybrid-parametrization denoiser network to learn the conditional score function, and generates super-resolution images efficiently using probability flow sampling. An additional image quality loss for diffusion-based super-resolution is introduced, which is computed efficiently and improves the quality of super-resolution results. Experiments demonstrate that our method achieves higher super-resolution quality than existing diffusion-based image super-resolution methods while having lower time consumption.

\section{Acknowledgments}

This work was supported by the National Key R\&D Program of China (2022YFB4701400/4701402), SSTIC Grant (JCYJ20190809172201639, WDZC20200820200655001), Shenzhen Key Laboratory (ZDSYS20210623092001004), and Beijing Key Lab of Networked Multimedia.

\bibliography{yuan}

\clearpage
\onecolumn
\appendix

\setcounter{secnumdepth}{2}

\section{Extended derivations}
\subsection{Proof of the mean and variance-preserving property}
In this section, we will prove the mean and variance-preserving property of our conditional forward process as defined by \eqref{eq:ours-forward}. First, we will show that the VP-SDE preserves the data mean and variance in the zero-mean and unit-variance case.
\begin{lemma*}
    If the distribution of the input $\bm{x}(0)$ to the forward process of VP-SDE, as given in the following equation, has zero mean and unit variance, then the distribution of $\bm{x}(t)$ has zero mean and unit variance for any $t$.
    \begin{equation}
        \mathrm{d} \bm{x} = -\frac{1}{2} \beta(t) \bm{x} \mathrm{d} t + \sqrt{\beta(t)} \mathrm{d} \bm{w}
    \end{equation}
\end{lemma*}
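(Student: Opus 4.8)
The plan is to track the evolution of the first two moments of $\bm{x}(t)$ directly from the SDE, rather than solving the SDE itself. Write $\bm{m}(t) = \mathbb{E}[\bm{x}(t)]$ and let $v(t)$ denote the common per-pixel variance of the coordinates of $\bm{x}(t)$. First I would apply It\^o's formula (equivalently Dynkin's formula) to the coordinate functions $x_i$ to get an ODE for the mean: for the linear drift $\bm{f}(\bm{x},t) = -\tfrac12\beta(t)\bm{x}$ the martingale part has zero expectation, so $\dot{\bm{m}}(t) = -\tfrac12\beta(t)\bm{m}(t)$. Since $\bm{m}(0) = \bm{0}$, this linear homogeneous ODE has the unique solution $\bm{m}(t) \equiv \bm{0}$, which gives the zero-mean claim.

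For the variance, because the noise acts independently and isotropically on each coordinate and the dynamics decouple coordinatewise, it suffices to treat a single scalar component $x$. Applying It\^o to $x^2$ gives $\mathrm{d}(x^2) = \beta(t)\bigl(1 - x^2\bigr)\,\mathrm{d}t + 2\sqrt{\beta(t)}\,x\,\mathrm{d}w$; taking expectations, using that the mean is $0$ so the second moment equals the variance, and discarding the It\^o integral (a true martingale under mild integrability of $\beta$), yields $\dot v(t) = \beta(t)\bigl(1 - v(t)\bigr)$. The constant function $v \equiv 1$ solves this ODE, and since $v(0) = 1$, uniqueness forces $v(t) \equiv 1$, establishing the unit-variance claim. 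As an alternative and more elementary route, I could instead invoke the known closed form of the VP-SDE transition kernel, $\bm{x}(t) = \sqrt{\alpha(t)}\,\bm{x}(0) + \sqrt{1 - \alpha(t)}\,\bm{\epsilon}$ with $\bm{\epsilon} \sim \mathcal{N}(\bm{0}, \bm{I})$ independent of $\bm{x}(0)$ and $\alpha(t) = \exp\bigl(-\int_0^t \beta(s)\,\mathrm{d}s\bigr)$, and simply compute $\mathbb{E}[\bm{x}(t)] = \sqrt{\alpha(t)}\,\mathbb{E}[\bm{x}(0)] = \bm{0}$ and $\text{Var}[\bm{x}(t)] = \alpha(t)\,\text{Var}[\bm{x}(0)] + (1 - \alpha(t)) = \alpha(t) + (1 - \alpha(t)) = 1$.

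I do not expect a genuine obstacle here: no heavy computation is involved. The only points requiring care are bookkeeping ones — making explicit that ``mean'' and ``variance'' are meant per pixel, noting that the independence and isotropy of the driving noise reduce everything to the scalar case, and (on the moment-ODE route) justifying the exchange of expectation with the stochastic integral. Once this lemma is in hand, the Proposition will follow by the affine change of variables $\bm{x} \mapsto (\bm{x} - \bm{\mu}(\bm{y}))/\sigma(\bm{y})$, which transforms the conditional forward process \eqref{eq:ours-forward} into the VP-SDE above.
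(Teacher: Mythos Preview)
Your proposal is correct. Your ``alternative and more elementary route'' --- invoking the closed-form transition kernel $\bm{x}(t) = \sqrt{\alpha(t)}\,\bm{x}(0) + \sqrt{1-\alpha(t)}\,\bm{\epsilon}$ and reading off the mean and variance directly --- is exactly what the paper does. Your primary route via moment ODEs is a valid and genuinely different argument: rather than citing the Gaussian transition law as a known fact, you derive the evolution of $\bm{m}(t)$ and $v(t)$ from the SDE via It\^o's formula and solve the resulting linear ODEs $\dot{\bm{m}} = -\tfrac12\beta\bm{m}$ and $\dot v = \beta(1-v)$. This is more self-contained (it does not presuppose the explicit form of $p_{0t}$) and extends immediately to any linear SDE, at the modest cost of the stochastic-calculus bookkeeping and the integrability caveat you already noted. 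The paper's approach is shorter but leans on the reader accepting the VP-SDE transition kernel as given; either route is entirely adequate for this lemma, and your remark that Proposition~\ref{thm:ours-diffusion} then follows by the affine change of variables matches the paper's subsequent step.
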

\begin{proof}
    From the transition probability of VP-SDE,
    \begin{equation*}
        p_{0t}(\bm{x}(t) \mid \bm{x}(0)) = \mathcal{N}(\bm{x}(t); \sqrt{\alpha(t)} \bm{x}(0), (1 - \alpha(t)) \bm{I})
    \end{equation*}
    It follows naturally that
    \begin{align*}
        \mathbb{E}[\bm{x}(t)] &= \sqrt{\alpha(t)} \bm{x}(0) = 0 \\
        \text{Var}[\bm{x}(t)] &= \text{Var}\left[\bm{x}(t) - \sqrt{\alpha(t)} \bm{x}(0)\right] + \text{Var}\left[\sqrt{\alpha(t)} \bm{x}(0)\right] \\
        &= (1 - \alpha(t)) + \alpha(t) = 1
    \end{align*}
\end{proof}

For our forward SDE \eqref{eq:ours-forward}:
\begin{equation*}
    \mathrm{d} \bm{x} = -\frac{1}{2} \beta(t) (\bm{x} - \bm{\mu}(\bm{y})) \mathrm{d} t + \sqrt{\beta(t) \sigma^2(\bm{y})} \mathrm{d} \bm{w}
\end{equation*}
We show that it preserves the data mean and variance of the conditional distribution $p(\bm{x} \mid \bm{x})$. We consider $\bm{y}$ to be a fixed value in the following derivation.
Define $\bm{z}$ as follows:
\begin{equation*}
    \bm{z} = \frac{1}{\sqrt{\sigma^2(\bm{y})}}(\bm{x} - \bm{\mu}(\bm{y}))
\end{equation*}
then $\bm{z}$ follows the forward equation of VP-SDE, as shown below:
\begin{align*}
    \mathrm{d} \bm{z} &= \frac{1}{\sigma^2(\bm{y})} \mathrm{d} \bm{x} \\
    &= -\frac{\beta(t)}{2 \sqrt{\sigma^2(\bm{y})}}(\bm{x}  - \bm{\mu}(\bm{y})) \mathrm{d} t + \sqrt{\beta(t)} \mathrm{d} \bm{w} \\
    &= -\frac{1}{2} \beta(t) \bm{z} \mathrm{d} t + \sqrt{\beta(t)} \mathrm{d} \bm{w}
\end{align*}

Since $\bm{z}(0)$ has zero mean and unit variance by its definition, $\bm{z}(t)$ has zero mean and unit variance for any $t$.
From $\bm{x} = \sqrt{\sigma^2(\bm{y})} \bm{z} + \bm{\mu}(\bm{y})$, we have
\begin{align*}
    \mathbb{E}[\bm{x}(t) \mid \bm{y}] &= \sqrt{\sigma^2(\bm{y})} \mathbb{E}[\bm{z}(t) \mid \bm{y}] + \bm{\mu}(\bm{y}) = \bm{\mu}(\bm{y}) \\
    \text{Var}[\bm{x}(t) \mid \bm{y}] &= \sigma^2(\bm{y}) \text{Var}[\bm{z}(t) \mid \bm{y}] = \sigma^2(\bm{y})
\end{align*}
This demonstrates that our forward SDE preserves mean and variance of $p(\bm{x} \mid \bm{y})$ during the transform.

\subsection{Derivation of the score matching loss}
With $\bm{y}$ set to a fixed value, our forward SDE is a case of the general forward SDE \eqref{eq:general-forward} with
\begin{align*}
    \bm{f}(\bm{x}, t) &= -\frac{1}{2} \beta(t) (\bm{x} - \bm{\mu}(\bm{y})) \\
    g(t) &= \sqrt{\beta(t) \sigma^2(\bm{y})}
\end{align*}

It is possible to derive the transition probability of our forward SDE from the transition probability of $\bm{z}$, as defined in the previous section:
\begin{align*}
    p_{0t}(\bm{z}(t) \mid \bm{z}(0)) &= \mathcal{N}(\bm{z}(t); \sqrt{\alpha(t)} \bm{z}(0), (1 - \alpha(t)) \bm{I}) \\
    p_{0t}(\bm{x}(t) \mid \bm{x}(0), \bm{y}) &= \mathcal{N}(\bm{x}(t); \sqrt{\alpha(t)} (\bm{x}(0) - \bm{\mu}(\bm{y})) + \bm{\mu}(\bm{y}), (1 - \alpha(t)) \sigma^2(\bm{y}) \bm{I}) \\
    &= \mathcal{N}(\bm{x}(t); \hat{\bm{\mu}}(\bm{x}(0), \bm{y}, t), \hat{\sigma}^2(\bm{y}, t) \bm{I})
\end{align*}
where $\hat{\bm{\mu}}$ and $\hat{\sigma}^2$ are defined as in the main text:
\begin{align*}
    \hat{\bm{\mu}}(\bm{x}, \bm{y}, t) &= \sqrt{\alpha(t)} (\bm{x} - \bm{\mu}(\bm{y})) + \bm{\mu}(\bm{y}) \\
    \hat{\sigma}^2(\bm{y}, t) &= (1 - \alpha(t)) \sigma^2(\bm{y})
\end{align*}

To obtain the score matching loss for our score predictor $\bm{s}_{\theta}$, we substitute the definitions of $\bm{f}(\bm{x}, t)$ and $g(t)$ in \eqref{eq:general-scoreloss}:
\begin{align*}
    \mathcal{L}_{\text{score}} &= \mathbb{E}_{(\bm{x}, \bm{y}) \sim \mathcal{D}} \mathbb{E}_{t} \mathbb{E}_{\bm{x}(t)} \left[ \norm*{\bm{s}_{\theta}(\bm{x}(t), \bm{y}, t) - \nabla_{\bm{x}(t)} \log p_{0t}(\bm{x}(t) \mid \bm{x}, \bm{y})}_2^2 \right] \\
    &= \mathbb{E}_{(\bm{x}, \bm{y}) \sim \mathcal{D}} \mathbb{E}_{t} \mathbb{E}_{\bm{x}(t)} \left[ \norm*{\bm{s}_{\theta}(\bm{x}(t), \bm{y}, t) + \frac{1}{\hat{\sigma}^2(\bm{y}, t)} (\bm{x}(t) - \hat{\bm{\mu}}(\bm{x}, \bm{y}, t))}_2^2 \right] \\
    &= \mathbb{E}_{(\bm{x}, \bm{y}) \sim \mathcal{D}} \mathbb{E}_{t}  \mathbb{E}_{\bm{\epsilon} \sim \mathcal{N}(\bm{0}, \bm{I})} \left[ \norm*{\bm{s}_{\theta}\left(\hat{\bm{\mu}}(\bm{x}, \bm{y}, t) + \sqrt{\hat{\sigma}^2(\bm{y}, t)} \bm{\epsilon}, \bm{y}, t \right) + \frac{1}{\sqrt{\hat{\sigma}^2(\bm{y}, t)}} \bm{\epsilon}}_2^2 \right]
\end{align*}

\subsection{Derivation of the probability flow}
By substituting $\bm{f}(\bm{x}, t)$ and $g(t)$ into the general probability flow \eqref{eq:general-ode}, we obtain the form of our probability flow \eqref{eq:ours-ode}.
\begin{equation*}
    \mathrm{d} \bm{x} = \left[ -\frac{1}{2} \beta(t) (\bm{x} - \bm{\mu}(\bm{y})) - \beta(t) \sigma^2(\bm{y}) \nabla_{\bm{x}} \log p_t(\bm{x} \mid \bm{y}) \right] \mathrm{d} t + \sqrt{\beta(t) \sigma^2(\bm{y})} \mathrm{d} \bm{w}
\end{equation*}
Since we consider $\bm{y}$ to be a fixed value in above, the distribution recovered by our probability flow is the conditional distribution $p(\bm{x} \mid \bm{y})$ as opposed to $p(\bm{x})$.

\subsection{Derivation of hybrid-parametrizations}
We show that the $\epsilon$-parametrization and $x_0$-parametrization can be expressed as follows:
\begin{align*}
    \bm{s}_{\theta,\epsilon}(\bm{x}, \bm{y}, t) &= -\frac{1}{\sqrt{\hat{\sigma}^2(\bm{y}, t)}} \bm{\epsilon}_{\theta}(\bm{x}, \bm{y}, t) \\
    \bm{s}_{\theta,x_0}(\bm{x}, \bm{y}, t) &= -\frac{1}{\hat{\sigma}^2(\bm{y})} \left( \bm{x} - \hat{\bm{\mu}}(\bm{x}_{0\theta}(\bm{x}, \bm{y}, t), \bm{y}, t) \right)
\end{align*}

We substitute their definitions into the score matching loss $\mathcal{L}_{\text{score}}$:
\begin{align*}
    \mathcal{L}_{\text{score},\epsilon} &= \mathbb{E}_{(\bm{x}, \bm{y}) \sim \mathcal{D}} \mathbb{E}_{t}  \mathbb{E}_{\bm{\epsilon} \sim \mathcal{N}(\bm{0}, \bm{I})} \left[ \norm*{\bm{s}_{\theta,\epsilon}\left(\hat{\bm{\mu}}(\bm{x}, \bm{y}, t) + \sqrt{\hat{\sigma}^2(\bm{y}, t)} \bm{\epsilon}, \bm{y}, t \right) + \frac{1}{\sqrt{\hat{\sigma}^2(\bm{y}, t)}} \bm{\epsilon}}_2^2 \right] \\
    &= \mathbb{E}_{(\bm{x}, \bm{y}) \sim \mathcal{D}} \mathbb{E}_{t} \mathbb{E}_{\bm{\epsilon} \sim \mathcal{N}(\bm{0}, \bm{I})} \left[ \frac{1}{\sigma^2(\bm{y}, t)} \norm*{\bm{\epsilon}_{\theta} \left(\hat{\bm{\mu}}(\bm{x}, \bm{y}, t) + \sqrt{\hat{\sigma}^2(\bm{y}, t)} \bm{\epsilon}, \bm{y}, t \right) - \bm{\epsilon}}_2^2 \right] \\
    \mathcal{L}_{\text{score},x_0} &= \mathbb{E}_{(\bm{x}, \bm{y}) \sim \mathcal{D}} \mathbb{E}_{t}  \mathbb{E}_{\bm{\epsilon} \sim \mathcal{N}(\bm{0}, \bm{I})} \left[ \norm*{\bm{s}_{\theta,x_0}\left(\hat{\bm{\mu}}(\bm{x}, \bm{y}, t) + \sqrt{\hat{\sigma}^2(\bm{y}, t)} \bm{\epsilon}, \bm{y}, t \right) + \frac{1}{\sqrt{\hat{\sigma}^2(\bm{y}, t)}} \bm{\epsilon}}_2^2 \right] \\
    &= \mathbb{E}_{(\bm{x}, \bm{y}) \sim \mathcal{D}} \mathbb{E}_{t} \mathbb{E}_{\bm{\epsilon} \sim \mathcal{N}(\bm{0}, \bm{I})} \left[ \frac{\alpha(t)}{[\hat{\sigma}^2(\bm{y}, t)]^2} \norm*{\bm{x}_{0\theta} \left(\hat{\bm{\mu}}(\bm{x}, \bm{y}, t) + \sqrt{\hat{\sigma}^2(\bm{y}, t)} \bm{\epsilon}, \bm{y}, t \right) - \bm{x}}_2^2 \right]
\end{align*}

It can be seen that $\epsilon$-parametrization and $x_0$-parametrization learns the noise component $\bm{\epsilon}$ and the clean data component $\bm{x}$ respectively. By reweighting terms for different $t$, these losses can be combined into the score matching loss for our hybrid-parametrization:
\begin{gather*}
\begin{split}
    \mathcal{L}_{\text{score}} &= \mathbb{E}_{(\bm{x}, \bm{y}) \sim \mathcal{D}} \mathbb{E}_{t} \mathbb{E}_{\bm{\epsilon} \sim \mathcal{N}(\bm{0}, \bm{I})} \\
    &\mathrel{\phantom{=}} \quad \left[ \norm*{\bm{\epsilon}_{\theta}(\bm{x}_t, \bm{y}, t) - \bm{\epsilon}}_2^2 + \norm*{\bm{x}_{0\theta}(\bm{x}_t, \bm{y}, t) - \bm{x}}_2^2 \right]
\end{split}
    \shortintertext{where}
    \bm{x}_t = \hat{\bm{\mu}}(\bm{x}, \bm{y}, t) + \sqrt{\hat{\sigma}^2(\bm{y}, t)} \bm{\epsilon}
\end{gather*}

\clearpage
\section{Details of model training}
The score predictor architectures for our method are provided in Table \ref{tab:training-details}, where ``Blocks'' is the number of residual blocks in each resolution stage of the U-Net architecture, ``Channels'' is the dimension of layers in each stage, and ``LR Blocks'' is the number of RRDB blocks in the LR feature extractor. All models are trained with the Adam optimizer, and the learning rate starts at $10^{-4}$ and are gradually reduced to $10^{-5}$ over the training procedure.

\begin{table}[h]
    \centering
    \begin{tabular}{cccc}
        \toprule
        Dataset & Blocks & Channels & LR Blocks \\
        \midrule
        DIV2K & [2, 2, 2, 2] & [64, 128, 256, 256] & 23 \\
        ImageNet & [2, 2, 2, 2] & [64, 128, 256, 256] & 16 \\
        CelebA & [3, 3, 3, 3] & [64, 128, 256, 256] & 8 \\
        \bottomrule
    \end{tabular}
    \caption{Score predictor architectures of our method on each dataset.}
    \label{tab:training-details}
\end{table}

\section{Extended experimental results}
\subsection{Effects of different parametrizations}
To further demonstrate the effect of different parametrizations, we provide the LPIPS metrics during training in Figure \ref{fig:appendix-ablation-param}. The metrics are produced by comparing $160 \times 160$ image patches from DIV2K validation set with super-resolution results generated using probability flow sampling. It can be seen that the hybrid parametrization achieves the best LPIPS results over the whole training process.

\begin{figure}[h]
    \centering
    \includegraphics[width=0.5\linewidth]{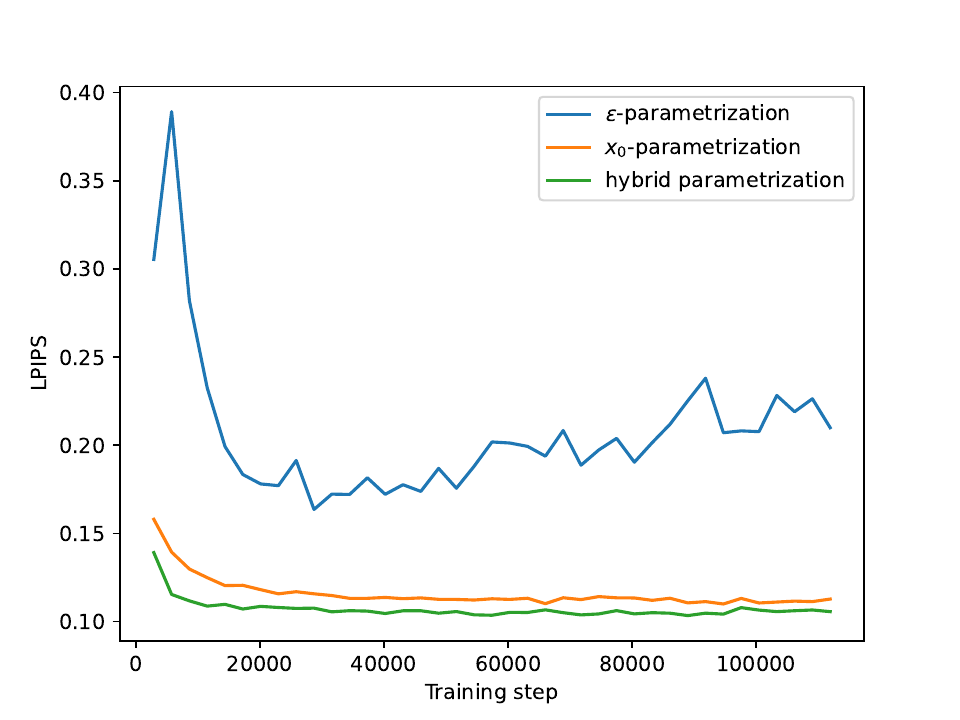}
    \caption{The LPIPS metrics on DIV2K validation set during training for different parametrizations. Note that the metrics are measured on $160 \times 160$ patches and do not correspond exactly to the main results in the paper.}
    \label{fig:appendix-ablation-param}
\end{figure}

\subsection{Effects of the image quality loss}
To further demonstrate the effect of the image quality loss, we adapt other diffusion-based super-resolution methods to use this loss as well. For SRDiff and SR3, since they use discrete-time diffusion models as opposed to the SDE-based diffusion model in our method, we compute the image quality loss using their standard sampling process instead. Additionally, we also test the performance of our method when the image quality loss is computed with stochastic sampling instead of probability flow sampling.

The results are shown in Table \ref{tab:appendix-ablation-finetune}.
It can be seen that $\mathcal{L}_{\text{quality}}$ significantly improves the performance of various diffusion-based super-resolution methods. This demonstrates the general usefulness of the image quality loss. Furthermore, it is more advantageous to compute $\mathcal{L}_{\text{quality}}$ with probability flow in our method, which is likely due to probability flow sampling being easier to optimize than stochastic sampling. It is also worth noting that, since the computation of the image quality loss requires generating new images, our method benefits the most from the loss as it has the most efficient sampling process.

\begin{table}[ht]
    \centering
    \setlength\tabcolsep{8pt}
    \begin{tabular}{cccc}
        \toprule
        Method & LPIPS\textdownarrow & PSNR\textuparrow & SSIM\textuparrow \\
        \midrule
        SRDiff & 0.136 & 27.41 & 0.79 \\
        SRDiff++ & 0.126 & 27.74 & 0.79 \\
        \midrule
        SR3 & 0.175 & 25.90 & 0.75 \\
        SR3++ & 0.167 & 25.20 & 0.72 \\
        \midrule
        Ours w/o $\mathcal{L}_{\text{quality}}$ & 0.120 & 27.26 & 0.76 \\
        Ours w/o prob. flow & 0.112 & 27.80 & 0.78 \\
        Ours & 0.108 & 28.03 & 0.79 \\
        \bottomrule
    \end{tabular}
    \caption{Effect of adding the image quality loss to various diffusion-based super-resolution methods. All results are measured on DIV2K validation set. ``SRDiff++'' and ``SR3'' denotes improved variants of SRDiff and SR3 respectively.}
    \label{tab:appendix-ablation-finetune}
\end{table}

\subsection{Extended visual results}
We provide extended visualization of super-resolution results in Figure \ref{fig:appendix-results-general} and Figure \ref{fig:appendix-results-face}, to demonstrate the performance of our method.

\begin{figure*}[ht]
    \centering
    \begin{tabular}{cccccc}
        \multirow{4}{*}[0.095\linewidth]{\includegraphics[width=0.25\linewidth]{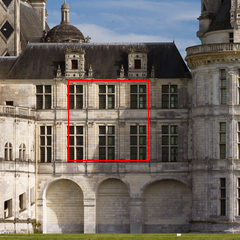}}
        & \includegraphics[width=0.11\linewidth]{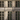}
        & \includegraphics[width=0.11\linewidth]{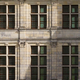}
        & \includegraphics[width=0.11\linewidth]{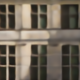}
        & \includegraphics[width=0.11\linewidth]{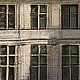}
        & \includegraphics[width=0.11\linewidth]{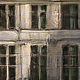} \\
        & LR & Ground Truth & RRDB & ESRGAN & SRFlow \\
        & \includegraphics[width=0.11\linewidth]{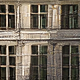}
        & \includegraphics[width=0.11\linewidth]{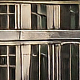}
        & \includegraphics[width=0.11\linewidth]{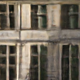}
        & \includegraphics[width=0.11\linewidth]{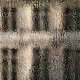}
        & \includegraphics[width=0.11\linewidth]{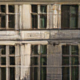} \\
        & HCFlow & SR3 & SRDiff & IR-SDE & Ours \\
        \multirow{4}{*}[0.095\linewidth]{\includegraphics[width=0.25\linewidth]{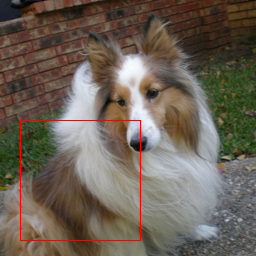}}
        & \includegraphics[width=0.11\linewidth]{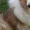}
        & \includegraphics[width=0.11\linewidth]{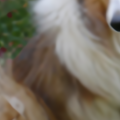}
        & \includegraphics[width=0.11\linewidth]{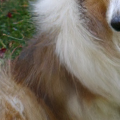}
        & \includegraphics[width=0.11\linewidth]{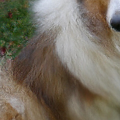}
        & \includegraphics[width=0.11\linewidth]{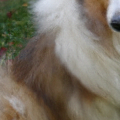} \\
        & LR & RRDB & ESRGAN & SRFlow & HCFlow \\
        & \includegraphics[width=0.11\linewidth]{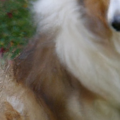}
        & \includegraphics[width=0.11\linewidth]{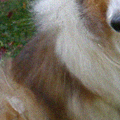}
        & \includegraphics[width=0.11\linewidth]{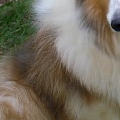}
        & \includegraphics[width=0.11\linewidth]{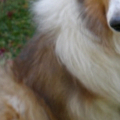}
        & \includegraphics[width=0.11\linewidth]{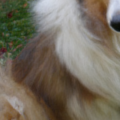} \\
        & SR3 & SRDiff & GDP & DiffPIR & Ours
    \end{tabular}
    \caption{Extended visual results of general image super-resolution. The first two rows are results on DIV2K validation set. The last two rows are results on ImageNet dev set.}
    \label{fig:appendix-results-general}
\end{figure*}

\begin{figure*}[ht]
    \centering
    \newlength{\mytablecellsize}
    \setlength\tabcolsep{1.5pt}
    \setlength{\mytablecellsize}{0.95\linewidth}
    \begin{tabular}{ccccccc|cc}
        \includegraphics[width=0.11\mytablecellsize]{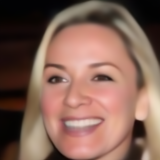} &
        \includegraphics[width=0.11\mytablecellsize]{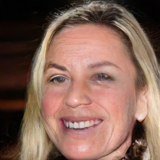} &
        \includegraphics[width=0.11\mytablecellsize]{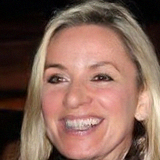} &
        \includegraphics[width=0.11\mytablecellsize]{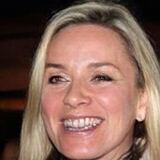} &
        \includegraphics[width=0.11\mytablecellsize]{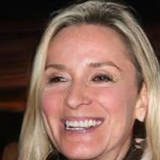} &
        \includegraphics[width=0.11\mytablecellsize]{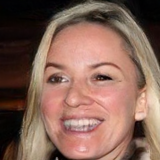} &
        \includegraphics[width=0.11\mytablecellsize]{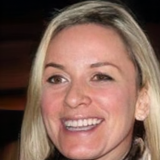} &
        \includegraphics[width=0.11\mytablecellsize]{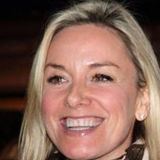} &
        \includegraphics[width=0.11\mytablecellsize]{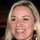} \\
        \includegraphics[width=0.11\mytablecellsize]{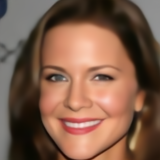} &
        \includegraphics[width=0.11\mytablecellsize]{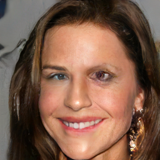} &
        \includegraphics[width=0.11\mytablecellsize]{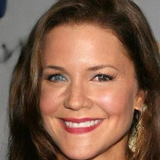} &
        \includegraphics[width=0.11\mytablecellsize]{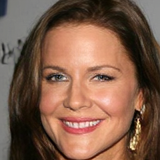} &
        \includegraphics[width=0.11\mytablecellsize]{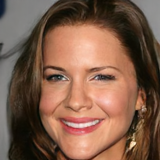} &
        \includegraphics[width=0.11\mytablecellsize]{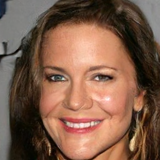} &
        \includegraphics[width=0.11\mytablecellsize]{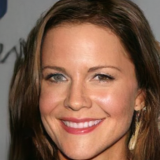} &
        \includegraphics[width=0.11\mytablecellsize]{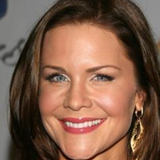} &
        \includegraphics[width=0.11\mytablecellsize]{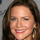} \\
        \includegraphics[width=0.11\mytablecellsize]{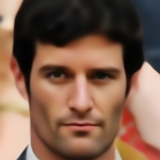} &
        \includegraphics[width=0.11\mytablecellsize]{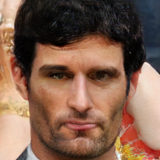} &
        \includegraphics[width=0.11\mytablecellsize]{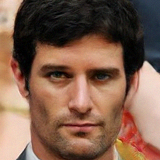} &
        \includegraphics[width=0.11\mytablecellsize]{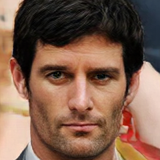} &
        \includegraphics[width=0.11\mytablecellsize]{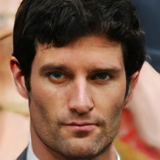} &
        \includegraphics[width=0.11\mytablecellsize]{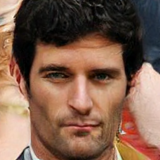} &
        \includegraphics[width=0.11\mytablecellsize]{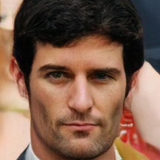} &
        \includegraphics[width=0.11\mytablecellsize]{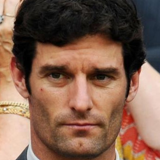} &
        \includegraphics[width=0.11\mytablecellsize]{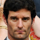} \\
        RRDB & ESRGAN & SRFlow & HCFlow & SR3 & SRDiff & Ours & Ground Truth & LR Input
    \end{tabular}
    \caption{Visual results of face image super-resolution on CelebA test set.}
    \label{fig:appendix-results-face}
\end{figure*}

\end{document}